\pgfplotsset{compat=1.17}
\newcolumntype{Y}{>{\centering\arraybackslash}X}
\newacronym{CP}{CP}{\emph{Career Planning}}
\newacronym{cacgs}{CACGs}{\emph{Computer-Assisted Career Guidance Systems}}
\newacronym{AI}{AI}{Artificial Intelligence}
\newacronym{CPP}{DPP}{\emph{Degree Planning Problem}}
\newacronym{qp}{QP}{\emph{Quadratic Programming}}
\newacronym{mip}{MIP}{\emph{Mixed-Integer Programming}}
\newacronym{miqp}{MIQP}{\emph{Mixed-Integer Quadratic Programming}}
\newacronym{miqcp}{MIQCP}{\emph{Mixed-Integer Quadratically Constrained Programming}}
\newacronym{milp}{MILP}{\emph{Mixed-Integer Linear Programming}}
\newacronym{ilp}{ILP}{\emph{Integer Linear Programming}}
\newacronym{uctp}{UCTP}{\emph{University-Course Timetabling Problem}}
\newacronym{csp}{CSP}{\emph{Constraint Satisfaction Problem}}
\newacronym{rcpsp}{RCPSP}{\emph{Resource-Constraint Project Scheduling Problem}}
\newacronym{rpp}{RPP}{\emph{Research Project Planning}}
\newacronym{jssp}{JSSP}{\emph{Job-Shop Scheduling Problem}}
\newacronym{wap}{WAP}{\emph{Workforce Assignment Problem}}
\newacronym{ga}{GA}{\emph{Genetic Algorithm}}
\newacronym{sa}{SA}{\emph{Simulated Annealing}}
\newacronym{ts}{TS}{\emph{Tabu Search}}
\newacronym{pso}{PSO}{\emph{Particle Swarm Optimization}}
\newacronym{ls}{LS}{\emph{Least Squares}}
\newacronym{lp}{LP}{\emph{Linear Programming}}
\newacronym{cp}{CP}{\emph{Constraint Programming}}
\newacronym{socp}{SOCP}{\emph{Second-Order Cone Programming}}
\newacronym{sfia}{SFIA}{\emph{Skills Framework for the Information Age}}
\newacronym{ers}{EdRecSys}{\emph{Educational Recommender Systems}}
\newacronym{er}{EdRecSys}{\emph{Educational Recommender System}}
\newacronym{rs}{RecSys}{\emph{Recommender Systems}}
\newacronym{iis}{IIS}{\emph{Irreducible Infeasible Subsystem}}
\newacronym{mus}{MUS}{\emph{Minimal Unsatisfiable Subset}}
\newacronym{mss}{MUS}{\emph{Maximal Satisfiable Subset}}
\newacronym{mcs}{MCS}{\emph{Minimal Correction Subset}}
\newacronym{mfs}{MFS}{\emph{Maximal Feasible Subsystem}}
\newacronym{tel}{TEL}{\emph{Technology Enhanced Learning}}
\newacronym{ai}{AI}{\emph{Artificial Intelligence}}
\newacronym{xai}{XAI}{\emph{eXplainable Artificial Intelligence}}
\newacronym{sdg}{SDG}{\emph{Sustainable Development Goals}}
\newacronym{gfi}{GFI}{\emph{Gunning Fog Index}}
\newacronym{exmip}{X-MILP}{\emph{Explainable MILP}}
\newacronym{asp}{UDSP}{\emph{User-Desired Satisfiability Problem}}
\newacronym{ml}{ML}{\emph{Machine Learning}}
\newacronym{ca}{CA}{\emph{Combinatorial Auction}}
\newacronym{cap}{CAP}{\emph{Combinatorial Auction Problem}}
\newacronym{wsp}{WSP}{\emph{Weighted Set Packing}}
\newacronym{wdp}{WDP}{\emph{Winner Determination Problem}}
\newacronym{psplib}{PSPLIB}{\emph{Project Scheduling Problem Library}}
\newacronym{cats}{CATS}{\emph{Combinatorial Auction Test Suite}}
\newacronym{llm}{LLM}{\emph{Large Language Models}}
\newacronym{nl}{NL}{\emph{natural language}}
\newacronym{bfs}{BFS}{\emph{Breadth-First Search}}
\newacronym{dfs}{DFS}{\emph{Depth-First Search}}
\begin{document}
\title{Exploiting Constraint Reasoning\\to Build Graphical Explanations\\for Mixed-Integer Linear Programming}
\titlerunning{Graphical Explanations for MILP}
%
\author{Roger X. Lera-Leri\thanks{Corresponding author}  \inst{1}\orcidlink{0000-0003-4981-8260} \and
Filippo Bistaffa\inst{1}\orcidlink{0000-0003-1658-6125} \and
Athina Georgara\inst{2}\orcidlink{0000-0001-5992-5372} \and
Juan A. Rodríguez-Aguilar\inst{1}\orcidlink{0000-0002-2940-6886}}
\authorrunning{R. X. Lera-Leri et al.}
%
\institute{Artificial Intelligence Research Institute (IIIA-CSIC), Barcelona, Spain
\email{\{rlera,filippo.bistaffa,jar\}@iiia.csic.es}\and
University of Southampton, Southampton, United Kingdom
\email{a.georgara@soton.ac.uk}}
\maketitle              
\begin{abstract}
Following the recent push for \emph{trustworthy} \acrshort{ai}, there has been an increasing interest in developing \emph{contrastive explanation} techniques for optimisation, especially concerning the solution of specific decision-making processes formalised as \acrshort{milp}s. 
Along these lines, we propose \acrshort{exmip}, a domain-agnostic approach for building contrastive explanations for \acrshort{milp}s based on constraint reasoning techniques. 
First, we show how to encode the queries a user makes about the solution of an \acrshort{milp} problem as additional constraints.
Then, we determine the reasons that constitute the answer to the user's query by computing the \gls{iis} of the newly obtained set of constraints. 
Finally, we represent our explanation as a ``graph of reasons'' constructed from the \gls{iis}, which helps the user understand the structure among the reasons that answer their query. 
We test our method on instances of well-known optimisation problems to evaluate the \emph{empirical hardness} of computing explanations.

\keywords{Explainability \and Constraint Reasoning \and Mixed-Integer Linear Programming.}
\end{abstract}

\section{Introduction}

In recent years, \gls{ai} has achieved notable momentum, enabling impressive results in many application domains. 
Alongside such improvements, there has been an increasing interest in developing trustworthy, human-centred AI that aligns with ethical values. A key requirement is \emph{explainability}, namely, providing explanations to humans who use or are affected by AI systems \cite{URL-EU-TrustworthyAI21}.
Consequently, there has been a resurgence in the area of \gls{xai}, 
which is concerned with designing \gls{ai} systems whose decisions can be explained and understood by humans \cite{rai2020explainable}.

\gls{xai} has been widely active in \gls{ml} to provide insights into the functionalities of models that operate as ``black boxes''. 
Following this push on \gls{xai}, there have been recent works on explainability for optimisation in domain-specific applications (e.g., scheduling \cite{pozanco2022explaining} or team formation \cite{georgara2022building}). 

Here we consider a domain-independent approach where the optimal solution of an optimisation problem formalised as a \gls{milp} is presented to a user, who then asks for an explanation of such a solution.
Based on the user query, the literature typically computes an alternative 
solution compliant with such a query, the so-called user-desired solution \cite{georgara2022building,zehtabi2024contrastive}. 
Then, it provides a contrastive explanation comparing the optimal solution with the user-desired solution. 
In contrast, our goal is to provide a \emph{contrastive explanation} to the user, presenting the \emph{reasons} for which their query leads to worse decisions in terms of optimality.

To achieve this goal, we propose \gls{exmip}, a general method that aims to provide contrastive explanations for \gls{milp} problems (see Figure \ref{fig:overview}). 
\gls{exmip} is a process that follows three main steps.
First, it casts a user-desired optimisation scenario compliant with a user's query as a \gls{csp}. 
Second, it finds the constraints in the \gls{csp} that make it infeasible. The infeasibility of such \gls{csp} proves that the user-desired optimisation scenario cannot lead to a better solution than the challenged optimal solution of the \gls{milp}. With this goal, our approach exploits constraint reasoning techniques to calculate the minimal subset of constraints that make the \gls{csp} infeasible, i.e., the \gls{iis} \cite{chinneck2007feasibility}. Third, \gls{exmip} builds a structured, human-readable explanation from the unstructured collection of mathematical constraints in the \gls{iis}. 
Such explanation, the so-called \emph{graph of reasons}, graphically captures the relationships between constraints and assigns them a \acrlong{nl} interpretation, which we call ``\emph{reason}''.

To the best of our knowledge, the novelty of our work stems from the following contributions: 
\begin{itemize}
    \item We cast the problem of generating a contrastive explanation for the result of an optimisation as a \gls{csp}, which encodes a user-desired scenario compliant with a query posed by a user. 
    \item We exploit constraint reasoning techniques to find why a user-desired optimisation scenario cannot lead to a better solution than the optimal solution. We do so by computing the minimal set of constraints, i.e., an \gls{iis}, in the \gls{csp} encoding the user-desired scenario. 
    \item We build a graphical explanation based on the information in the \gls{iis}, whose structure allows to capture the relationships between constraints that an unstructured set of constraints cannot. 
    \item We evaluate \gls{exmip} on different \gls{milp} problems to showcase its generality. We test our approach on a scheduling problem, the so-called \gls{rcpsp}, and a particular instance of the \gls{wsp} problem. 
    Our results show that despite the hardness of solving these optimisation problems, \gls{exmip} computes explanations in less than $60$ seconds in most cases. Moreover,  \gls{exmip} does not recompute the solution to an optimisation problem, in contrast with previous works \cite{zehtabi2024contrastive,georgara2022building}. 
\end{itemize}

\tikzstyle{box}=[
    draw,
    shape=rectangle,
    inner sep=0,
    minimum width=1.75cm,
    minimum height=1cm,
    align=center,
]
\tikzstyle{gnode}=[
    draw,
    shape=circle,
    inner sep=0,
    minimum size=1.0cm,
    align=center,
]

\begin{figure}[t]
\centering
\begin{adjustbox}{max width=0.90\columnwidth}
\begin{tikzpicture}[
    every node/.style={outer sep=0},
    yscale=0.8,
]


\node[label={\small User}] at (0,0) (user) {\includegraphics[width=8mm]{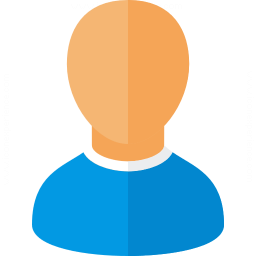}};

\node[box,minimum width=15mm,minimum height=9mm,rounded corners,very thick] at ($(user)+(0,2.1)$) (mip) {\small MILP\\\small Problem};


\node[box,minimum width=30mm,minimum height=9mm,rounded corners,label=below:{\small User Query ($q$)},very thick,font=\itshape] at ($(user) - (3,0)$) (query) {\small Why not this\\other solution $S'$?};

\node[label={\small Solver}] at ($(mip) + (2.6,0)$) (solver) {\includegraphics[width=10mm]{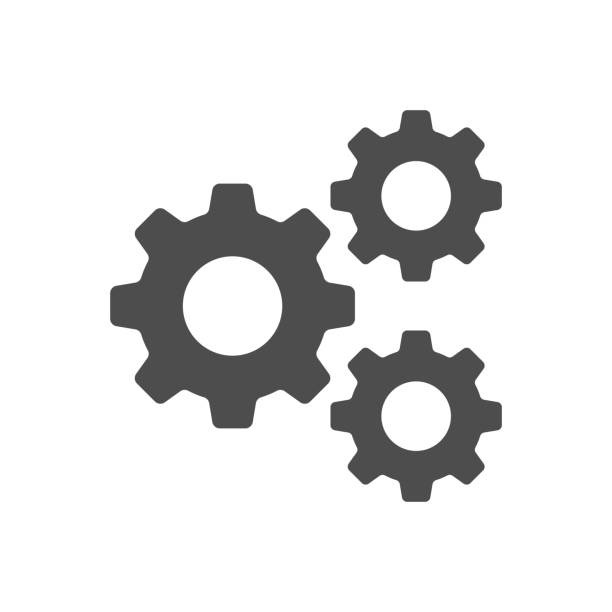}};

\node[box,minimum width=18mm,minimum height=9mm,rounded corners,very thick] at (solver|-user) (solution) {\small Solution $S$};


\node[box,minimum width=20mm,minimum height=10mm,rounded corners,very thick] at ($(user) + (-7,2.5)$) (QT) {\small Query\\\small Translation};

\node[box,minimum width=20mm,minimum height=12mm,rounded corners,very thick] at ($(QT) - (0,1.9)$) (ASP) {\small User-Desired\\\small Satisfiability\\\small Problem};

\node[box,minimum width=20mm,minimum height=12mm,rounded corners,very thick] at ($(ASP) - (0,2)$) (IIS) {\small Irreducible\\\small Infeasible\\\small Subsystem};

\node[box,minimum width=20mm,minimum height=10mm,rounded corners,very thick] at ($(IIS) - (0,1.9)$) (reasons) {\small Graph of\\\small Reasons};

\node[draw=black,very thick,fit=(QT) (reasons),inner sep=2.0mm,label=above:{\small Explainable MILP}] (exmip) {};

\node[box,rounded corners,minimum width=6mm,minimum height=6mm,very thick] at ($(user) - (2.55,2.7)$) (q0) {\small $q$};
\node[box,rounded corners,minimum width=16mm,minimum height=6mm,very thick] at ($(q0) + (2,0.5)$) (r1) {\small \emph{Reason 1}};
\node[box,rounded corners,minimum width=16mm,minimum height=6mm,very thick] at ($(q0) + (2,-0.5)$) (r2) {\small \emph{Reason 2}};
\node[box,rounded corners,minimum width=16mm,minimum height=6mm,very thick] at ($(r1) + (2.6,0)$) (r3) {\small \emph{Reason 3}};
\node[box,rounded corners,minimum width=16mm,minimum height=6mm,very thick] at (r3|-r2) (r4) {\small \emph{Reason 4}};

\path[-,draw,black] (q0.east) to (r1.west);
\path[-,draw,black] (q0.east) to (r2.west);
\path[-,draw,black] (r2.east) to (r3.west);
\path[-,draw,black] (r2.east) to (r4.west);
\path[-,draw,black] (r1.south) to (r2.north);

\node[draw=black,very thick,fit=(q0)(r1)(r4),inner sep=2.5mm,label=below:{\small Explanation}] (explanation) {};


\path[-{Stealth},draw,black] (user.west) to (query.east);
\path[-{Stealth},draw,black] (mip.west) to (ASP.east);
\path[-{Stealth},draw,black,out=90,in=0] (query.north) to (QT.east);
\path[-{Stealth},draw,black] (QT.south) to (ASP.north);
\path[-{Stealth},draw,black] (ASP.south) to (IIS.north);
\path[-{Stealth},draw,black] (IIS.south) to (reasons.north);
\path[-{Stealth},draw,black] (reasons.east) -- (explanation.west) node[midway, above] {};
\path[-{Stealth},draw,black] (explanation.north) to (user.south);
\path[-{Stealth},draw,black] (solution.west) to (user.east);
\path[-{Stealth},draw,black] (mip.east) to (solver.west);
\path[-{Stealth},draw,black] (solver.south) to (solution.north);

\end{tikzpicture}
\end{adjustbox}
\caption{Overview of our approach. See Section \ref{sec:expl} for the details.}
\label{fig:overview}
\end{figure}

\section{Background}
\label{sec:back}

\subsection{\acrlong{milp}}
\label{subsec:mip}

An \gls{milp} is an optimisation problem in the form
\begin{equation}\label{prob:mip}
    \mbox{minimise} \quad \textbf{c}^{T}\cdot\textbf{x} + \textbf{h}^{T}\cdot\textbf{y}, \quad
    \mbox{s.t.} \quad A\cdot \textbf{x} + G\cdot \textbf{y}\leq \textbf{b} ,
\end{equation}
where $\textbf{x}\in \mathbb{Z}^{n}$ is a vector of integer variables, $\textbf{y} \in \mathbb{R}^{p}$ is a vector of variables, $\textbf{c}\in \mathbb{R}^{n},\textbf{h}\in \mathbb{R}^{p}, \text{ and } \textbf{b}\in\mathbb{R}^{m}$ are column vectors, and $A\in \mathbb{R}^{m\times n},\text{ and } G\in \mathbb{R}^{m\times p}$ are matrices. 
In general, \gls{milp} problems are NP-Hard \cite{wolsey2020integer} and are expensive to solve. 
On the other hand, available solvers such as CPLEX optimally solve \glspl{milp}.

\subsection{Benchmark Optimisation Problems}\label{subsec:mip_prob}

We describe two well-known optimisation problems that can be formalised as \glspl{milp}, which we employ as benchmarks in Section \ref{sec:exp} to evaluate our approach. We choose them because they are examples of a scheduling problem (the \gls{rcpsp}), and a \gls{wsp} problem (the \gls{wdp} for \gls{ca}). 
Our choice of employing a \gls{wsp} formalisation is also motivated by the fact that such a problem has been used in real-world application domains such as \emph{shared mobility} \cite{bistaffa2019computational}, \emph{cooperative learning} \cite{andrejczuk2019synergistic}, and, according to \cite{FENOY2024104064}, any scenario involving the formation of collectives of agents. 
The details of their formalisation are in the supplementary material.\footnote{\url{https://filippobistaffa.github.io/files/EXTRAAMAS_2025_Suppl_Material.pdf}}

\paragraph{\gls{rcpsp}}
\gls{rcpsp} is a problem consisting of activities that must be scheduled to meet precedence and resource constraints while minimising a project's makespan \cite{hartmann2022}. 
Each activity requires a processing time and certain resources to be completed.  
Moreover, each activity has preceding activities, and hence it can only begin once they finish. 
The goal of an \gls{rcpsp} is to find a feasible schedule that minimises the total project duration.
Therefore, the decision variables of the \gls{rcpsp} are $x_{j,t} \in \{0,1\}$, which indicate whether activity $j$ is completed at time $t$. Then, an \gls{rcpsp}s' constraints are: (i) every activity must be completed, (ii) every activity must start after their predecessors have been completed, and (iii) resource availability has to be respected every time unit.

\paragraph{\gls{wdp}}
Given a combinatorial auction that receives a set of bids over a set of goods, the \gls{wdp} is that of selecting the subset of bids that maximise revenue \cite{de2003combinatorial}. 
We follow the standard approach to solve a \gls{wdp} by casting it as a \gls{wsp} problem modelled as an \gls{milp}, where decision variable $x_b \in \{0,1\}$ indicates whether bid $b$ is selected.
The \gls{wdp}'s constraints force that every good is allocated up to one winning bid.

\subsection{\acrlong{iis}}\label{subsec:iis}

The \acrfull{iis} is a minimal set of inconsistent constraints. 
However, to properly define the notion of \gls{iis}, we must first define the notions of constraint and system of constraints. We borrow the definition of constraint from \cite{meseguer2006soft}:
\begin{definition}
    Given a sequence of distinct variables $V=\langle x_1,\dots,$ $x_k\rangle$ and their associated finite domains $\mathcal{D}_1,\dots,\mathcal{D}_k$, a constraint $c$ is a subset of $\mathcal{D}_1\times\dots\times\mathcal{D}_k$. Thus, a constraint $c$ specifies the assignments allowed by the variables of $V$.
\end{definition}
In other words, a constraint is a condition that variables must satisfy. 
Then, a constraint system is a set of constraints $C$. 
Hence, we define an infeasible system as follows.
\begin{definition}
    An Infeasible System of constraints is a set of constraints $C$ such that no assignment of $V$ exists that satisfies all the constraints in $C$.
\end{definition}
Having defined an infeasible system of constraints, we define an \gls{iis} borrowing the definition from \cite{chinneck2007feasibility}:

\begin{definition}\label{def:iis}
    An \gls{iis} is a subset of constraints within an infeasible system that is in itself infeasible, but any proper subset of the \gls{iis} is feasible. 
\end{definition}

Finding an \gls{iis} is crucial in \gls{csp} 
because it identifies the constraints preventing us from obtaining a feasible solution. Thus, many algorithms have been developed to find the \gls{iis} for \gls{lp} \cite{van1981irreducibly,gleeson1990identifying,parker1996finding,chinneck1997finding}, \gls{milp} \cite{guieu1999analyzing} or Non-Linear Integer Programming \cite{chinneck1995analyzing}. Finding the \gls{iis} involves iteratively analysing its infeasible system and identifying the minimal subset of constraints responsible for infeasibility. Some examples of algorithms that find \glspl{iis} are: the Deletion Filter \cite{chinneck1991locating}, which iteratively deletes constraints and checks the feasibility of the system; the Additive Method \cite{tamiz1996detecting}, which iteratively adds constraints to an initially empty set and tests whether the set is feasible; or the combination of both methods \cite{guieu1999analyzing}.  
Although computing \glspl{iis} can be costly, state-of-the-art solvers efficiently compute them.

\section{Related Work}\label{sec:related}

Here, we discuss recent literature in the \acrlong{xai} field and explainability for optimisation. 
Notably, most \gls{xai} works focus on \gls{ml} models.
For instance, \cite{arrieta2020explainable,adadi2018peeking} provide an overview of the \gls{xai} current proposals in \gls{ml}. 
These works categorise \gls{xai} techniques based on their explanations' scope, the methodologies behind the algorithms, and their explanation level or usage. Besides, \cite{samek2019towards} point out future \gls{xai} challenges, such as formalising a theory of \gls{xai} or providing high-level explanations.

Furthermore, it is important to characterise what constitutes a \emph{good} explanation. \cite{miller2019explanation} reviews the social sciences literature that studies how humans define, generate, and evaluate explanations. \cite{miller2019explanation} extracts important findings and provides insights on how to apply such findings to \gls{xai}, concluding that giving explanations is a non-trivial task since such explanations have to be: (i) \emph{contrastive} (in response to particular counterfactual cases); (ii) \emph{social} (to transfer knowledge as a part of a conversation); (iii) \emph{selected} (humans select one or two reasons as an explanation); and (iv) \emph{causal} (explanations with causal relationships are more important than referring to probabilities or statistical relations).

Even though many AI systems use optimisation-based approaches, there is limited research on providing explanations for such systems. 
\cite{nardi2022graph} find justifications for the outcome of collective decision algorithms. 
\cite{bogaerts2021framework} give step-wise explanations for \gls{csp}.
Moreover, \cite{korikov2021counterfactual,pozanco2022explaining,georgara2022building} explain the outcome of different classical combinatorial problems. However, the explanations generated by such approaches lack some key properties enumerated by \cite{miller2019explanation}. Although \cite{korikov2021counterfactual,georgara2022building} provide contrastive explanations, they refer to numerical reasons, instead of referring to causal relationships. On the other hand, \cite{pozanco2022explaining} refer to reasons, but their explanations are not contrastive. Moreover, these approaches are not general; each one addresses a particular application domain. 
Against this background, here we propose \gls{exmip}, a domain-independent explanatory approach that generates contrastive, social, and selected explanations that refer to reasons indicating why the optimal solution is better than the one posed by the user.  

Although we employ \glspl{iis} in \gls{exmip}, \glspl{mus} are an analogous concept that has been studied in the Boolean satisfiability (SAT) literature \cite{silva2010minimal}. 
Several works discuss how to extract multiple \glspl{mus} \cite{liffiton2005finding}, 
since an infeasible system of constraints may possess an exponential number of \glspl{mus} \cite{chakravarti1994some}. 
Along these lines, \cite{junker2004quickxplain,gamba2023efficiently} propose methods to select the \gls{mus} whose constraints are the most relevant according to user preferences. 
Furthermore, there has been an effort in computing the \gls{mus} with minimal cardinality, i.e., the \gls{mus} with a minimum number of constraints \cite{liffiton2009branch,ignatiev2015smallest}. 
Crucially, the above-mentioned works on \gls{mus} focus on providing explanations for the \emph{infeasibility} of a given \emph{SAT Problem} by extracting a \gls{mus}.
In this paper, we pursue a \emph{different} 
goal. We aim to provide explanations for \glspl{milp} by answering a user's query on the optimal solution.
We do so by computing the \gls{iis} of a specially-defined \gls{csp} (see Section \ref{subsec:asp}) and identifying the \emph{reasons} that constitute such an explanation, which are then represented as a human-readable graph (see Section \ref{subsec:ordering}).

\begin{table}[b]
    \caption{Start and completion times, predecessors, and resources for some activities in the optimal solution of Example \ref{example:knapsack}. }
    \label{tab:solution_rcpsp}
    \setlength{\tabcolsep}{9pt}
    \centering
    \begin{tabularx}{0.95\columnwidth}{ccccc}
        
        \toprule
        Activities & Start time & Completion time & Predecessors & Resources \\
        \midrule
        $16$ & $14$ & $23$ & $\{10\}$ & $(0,0,0,5)$ \\
        $17$ & $24$ & $29$ & $\{13,14\}$ & $(0,0,0,8)$ \\
        $22$ & $30$ & $36$ & $\{17,16\}$ & $(2,0,0,0)$ \\
        $23$ & $37$ & $38$ & $\{22\}$ & $(3,0,0,0)$ \\
        $24$ & $39$ & $41$ & $\{23\}$ & $(0,9,0,0)$ \\
        \bottomrule
    \end{tabularx}
\end{table}

\begin{table*}[t]
    \setlength{\tabcolsep}{3pt}
    \caption{Set of possible user questions, query constraints and constraint encodings for the \gls{rcpsp} formalisation.}
    \label{tab:queries_rcpsp}
    \begin{tabularx}{\textwidth}{p{0.04\textwidth}p{0.33\textwidth}p{0.30\textwidth}Y}
        \toprule
        & \textbf{Question} & \textbf{Query constraints} & \textbf{Constraint encoding ($C_Q$)} \\
        \midrule
        Q1.& Why is activity $j$ completed at time $t$? & \textbf{veto} $j$ complete at time $t$ & ${x}_{j,t} = 0$ \\
        Q2.&  Why is activity $j$ \textbf{not} completed at time $t$? & \textbf{enforce} $j$ complete at time $t$ & ${x}_{j,t} = 1$ \\
        Q3.& Why is activity $j$ \textbf{not} completed before time $t$? & \textbf{enforce} $j$ complete before time $t$ & $\sum_{t'=EF_j}^{t-1}{x}_{j,t'} = 1$ \\
        Q4.& Why is activity $j$ \textbf{not} completed after time $t$? & \textbf{enforce} $j$ to complete after $t$ & $\sum_{t'=t+1}^{LF_j}{x}_{j,t'} = 1$ \\
        Q5.& Why is the group of activities $A$ completed at $t$? & \textbf{veto} to complete $j\in A$ at time $t$ & $\sum_{j\in A}{x}_{j,t} < |A|$ \\
        Q6.& Why is the group of activities $A$ \textbf{not} completed at $t$? & \textbf{enforce} to complete all of them at time $t$ & $\sum_{j\in A}{x}_{j,t} = |A|$ \\
        Q7.& Why is activity $j$ completed at $t$ instead of $t'$? & \textbf{enforce} complete $j$ at $t'$ and \textbf{veto} $j$ at $t$& ${x}_{j,t'} = 1$, ${x}_{j,t} = 0$ \\
        Q8.& Why is activity $j$ completed at $t$ instead of $j'$? & \textbf{enforce} $j'$ and \textbf{veto} $j$ complete at time $t$ & ${x}_{j',t} = 1$, ${x}_{j,t} = 0$ \\
        \bottomrule
    \end{tabularx}
\end{table*}
\normalsize

\section{Building Explanations for \glspl{milp}}\label{sec:expl}

We aim to provide 
contrastive explanations to users for \gls{milp} problems.\footnote{In this paper we only focus on providing explanations for optimisation problems formalised as \glspl{milp}. Explaining problems formalised as \glspl{mip}, (Max)SATs, etc., is outside the scope of this paper.} 
In such problems, constraints encode restrictions that the solution must satisfy, while the objective function encodes some preferences or quantities to be optimised \cite{boyd2004convex}. Thus, a MILP solver output is affected by the constraints and the objective function. 
Hence, we propose to exploit the information provided by the constraints and the objective function in an \gls{milp} to build contrastive explanations. 
The following running example illustrates the concepts of our approach.

\begin{example}\label{example:knapsack}
    Consider an \gls{rcpsp} instance with 30 activities and 4 different resources. 
    As mentioned in Section \ref{subsec:mip_prob}, solving an \gls{rcpsp} amounts to minimising a project duration objective function $f$, 
    while satisfying a constraint set $C$. 
    We partially show the optimal solution $\textbf{x}^{*}$ in Table \ref{tab:solution_rcpsp}. The optimal (minimum) project duration would be 43 ($f^{*} = 43$).
\end{example}

Some \gls{xai} approaches in the literature build a contrastive explanation by comparing an optimal solution with a user-desired solution \cite{georgara2022building,zehtabi2024contrastive}. The latter results from translating a user ``\emph{query}'', i.e., a question for which a user expects an explanation, into \emph{addditional constraints} that a solution must satisfy.

\subsection{Query Translation}\label{sec:query}

Here, we characterise the types of queries that a user can pose and how to encode them into query constraints. 
More precisely, a user asks ``\emph{why}'' questions about value assignments that did occur in an optimal solution $\textbf{x}^{*}$ of an \gls{milp}. 
For example, regarding a scheduling problem, a user might ask ``why is job j scheduled at a given time t?''. 
Hence, a contrastive explanation should tell the user why a solution complying with the query constraints is not as good as the optimal solution.
For that, we assume that the query a user poses refers to a \gls{milp} solver output \cite{zevcevic2021causal}. 
Therefore, a user query is related to the decision variables of the \gls{milp} at hand.
In combinatorial optimisation problems, decision variables are related to objects, activities, agents, etc. 
For instance, in the \gls{rcpsp}, the decision variables indicate whether an activity is completed at a given time. Hence, there is a decision variable for every pair of activities and time units. 

Here we follow the query constraint classification proposed in \cite{georgara2022building}:
\begin{itemize}
\item \textbf{enforce} constraints to express that some value assignments that do not hold in an optimal solution are required to hold in the user-desired solution; and 
\item \textbf{veto} constraints to express that some value assignments that do hold in an optimal solution must not hold in the user-desired solution.
\end{itemize}

Following such classification, Table \ref{tab:queries_rcpsp} lists possible queries for the \gls{rcpsp}. The table also shows the translations of queries into (linear) query constraints. 
Note that such a methodology can be applied to other \gls{milp} problems \cite{georgara2022building}. We show the list of possible queries that we identify for the \gls{wdp} in the supplementary material. Here, we discuss some examples for the \gls{rcpsp}.

In the case of the \gls{rcpsp}, for example, the query ``\emph{Why is activity $j$ scheduled to be completed at time $t$?}'' translates into \textbf{vetoing} the completion of activity $j$ at time $t$ (i.e., $x_{j,t}=0$) to enforce an alternative solution to the optimal solution. Notice that this query refers to a \textbf{single assignment} occurring in the optimal solution. 
Additionally, to comply with the query ``\emph{Why is the group of activities $A$ scheduled to be completed at time $t$?}'' we must \textbf{veto} the completion of all activities in $A$ at $t$ (i.e., $\sum_{j \in A} x_{j,t}=0$). In this case, the query refers to the schedule of multiple activities, i.e., \textbf{multiple assignments}, in the optimal solution.
Furthermore, when addressing the query ``\emph{Why is activity $j$ \textbf{not} scheduled to be completed at time $t$?}'' we must \textbf{enforce} the hypothetical scheduling of activity $j$, which the current scheduling disregards.
Moreover, to answer the query ``\emph{Why is activity $j$ scheduled to be completed at time $t$ instead of activity $j'$?}'', we must: (1) enforce activity $j'$ ; and (2) veto $j$ at time $t$. Thus, we consider combinations of \textbf{enforce} and \textbf{veto} constraints to answer more specific queries about value assignments that did and did not occur.

\begin{example}\label{example:KP2}
    Following Example \ref{example:knapsack}, a user may ask---``\emph{Why is activity 24 not completed before time 41?}''---about the solution in Table \ref{tab:solution_rcpsp}. Building an explanation to this query requires enforcing activity 24 to be completed before time 41, which is encoded by constraint   \begin{equation}\label{eq:rcpsp:query:example}
        \sum_{t=1}^{40} x_{24,t} = 1,
    \end{equation} 
    (see Table \ref{tab:queries_rcpsp}). 
    Enforcing this constraint would allow to obtain a user-desired solution. 
\end{example}

\subsection{The User-Desired Satisfiability Problem}\label{subsec:asp}

Based on a user's query, the literature typically considers user-desired optimisation scenarios that accommodate such a query, 
aiming at quantifying how worse a user-desired solution would be in terms of optimality. 
In this context, the literature typically defines the following extended problem \cite{zehtabi2024contrastive,georgara2022building}.

\begin{definition}
    Given a \gls{milp} problem $M=\langle f, C \rangle$ of the form of \eqref{prob:mip} (we refer to $M$ as \emph{main problem}, where $f$ is the objective function and $C$ are the constraints), and a set of query constraints $C_Q$ associated with a query, we obtain the \emph{extended problem} $M'=\langle f, C \cup C_Q \rangle$ by adding $C_Q$.
\end{definition}

\noindent
Assuming $M$ is a feasible problem, solving an extended problem $M'$ would result in one of the following three cases.

\begin{enumerate}\label{enum:cases}
    \item \label{enum:case_inf} \textbf{Infeasibility.} There is no guarantee that the extended problem is feasible. Hence, adding the query constraints makes the new problem $M'$ infeasible. 
    \item \label{enum:case_opt} \textbf{Optimality.} We obtain a solution whose value is the same as the optimal solution for $M$. This is possible when the main problem has multiple optimal solutions. 
    \item \label{enum:case_sub} \textbf{Suboptimality.} This occurs when the optimal solution to $M'$ is worse than the optimal solution to $M$. 
\end{enumerate}

However, in this paper, we neither solve an extended \gls{milp} problem that accommodates a user's query (as in \cite{zehtabi2024contrastive}) nor quantify how worse the user-desired solution is 
(as in \cite{georgara2022building}). 
Since numerical comparisons are not desirable in explanations \cite{miller2019explanation}, we propose instead to find the reasons why the user-desired scenario cannot lead to a better solution than the optimal solution. 
To do so, we cast the problem of obtaining the user-desired solution as a \gls{csp}. Such a \gls{csp} contains the problem constraints, the query constraints and an additional constraint enforcing a solution at least as good as the optimal solution to the ``\emph{main}'' problem. 
This formulation will serve to extract the constraints that restrict us from obtaining a user-desired solution with optimal value. 

\begin{definition}\label{def:asp}
    Let $M$ be a \gls{milp}, $C_Q$ a set of query constraints, and $f^{*}=f(\textbf{x}^{*})$ the value of the optimal solution $\textbf{x}^{*}$ for $M$.The \gls{asp} for $M$ is a \gls{csp} composed of the following constraints: 
\begin{equation}\label{eq:asp}
    (i)\; f(\textbf{x}) \leq f^{*}, \quad (ii)\; C, \quad (iii)\; C_Q,
\end{equation}
where $\textbf{x}$ are variables and (i) is a minimality constraint. 
\end{definition}
\noindent
Without loss of generality, we assume that $M$ is a minimisation problem, and hence the value of a better solution must be lower than $f^{*}$.

\begin{example}\label{example:ASP}
We define a \gls{asp} for example \ref{example:KP2} as the following \gls{csp} with variables $\textbf{x}$:    
\begin{equation}\label{eq:asp:example}
    (i)\; f(\textbf{x}) \leq f^{*} = 43, \quad (ii)\; C, \quad (iii)\; \sum_{t=1}^{40} x_{24,t} = 1,
\end{equation}
    where $(i)$ is the minimality constraint, $(ii)$ are the main problem constraints, and $(iii)$ is the query constraint.
\end{example}

\subsection{Computing the Infeasible Constraints}\label{subsec:comp_iis}

Having defined the \gls{asp}, we extract the minimal set of constraints that prevent us from finding a feasible solution to the \gls{asp}, i.e., we extract the \gls{iis}.
Notice that there might be multiple \glspl{iis} corresponding to the same set of constraints since the number of \glspl{iis} increases exponentially with the size of a constraint set \cite{chakravarti1994some}. 
More precisely, different \glspl{iis} (despite being all ``irreducible'') could contain different numbers of constraints (i.e., some \glspl{iis} might be smaller than others). 
Hence, this would subsequently lead us to explanations of varying sizes, since we build our explanations based on the constraints of an \gls{iis}.  
Following \cite{rosenfeld2021better},\footnote{The number of ``rules'' (i.e., constraints in our case) in an explanation is a commonly used metric to evaluate its simplicity.} 
we should aim to compute the smallest possible \gls{iis} because the more concise an explanation, the easier to understand.

On the one hand, some algorithms in the literature can compute the ``smallest'' \gls{iis} \cite{liffiton2009branch,ignatiev2015smallest}. However, computing the ``smallest'' \gls{iis} is very costly.
On the other hand, off-the-shelf solvers such as CPLEX use specialised heuristics to efficiently compute a single \gls{iis} that is not guaranteed to be the smallest one. 
In Section \ref{subsec:results:smalliis}, we experimentally compare the solutions computed by an algorithm for computing the smallest \gls{iis},  adapted from \cite{ignatiev2015smallest}, and CPLEX. 
We observe that, in practice, CPLEX computes the smallest \gls{iis} for most instances with a significantly lower computational cost.

\tikzstyle{box}=[
draw,
shape=rectangle,
inner sep=0,
minimum width=1.75cm,
minimum height=1cm,
align=center,
]
\tikzstyle{gnode}=[
draw,
shape=circle,
inner sep=0,
minimum size=1.0mm,
align=center,
]

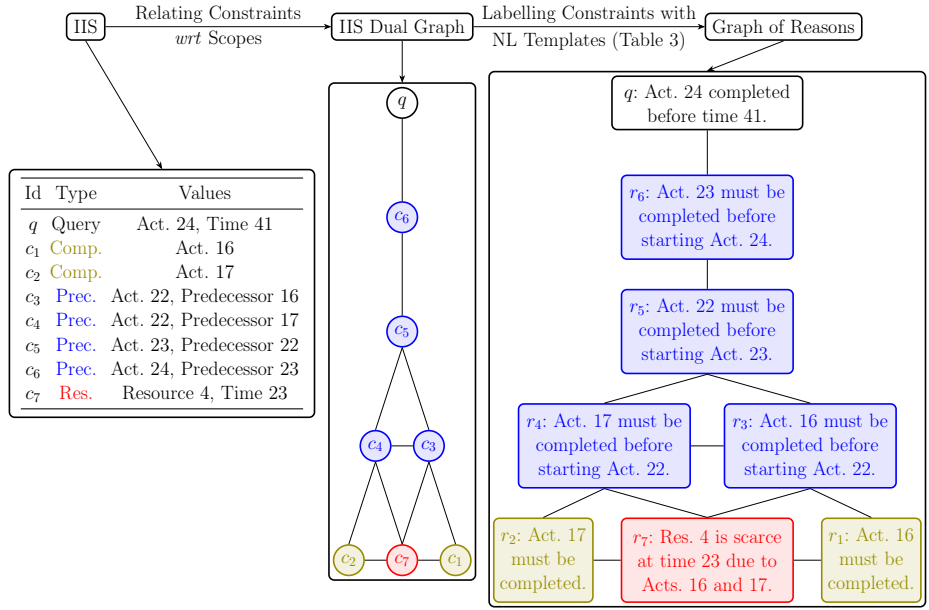
\begin{figure}[t]
\centering

\begin{adjustbox}{max width=\textwidth}
\begin{tikzpicture}

\tikzset{every node/.style={font=\Large}}
\node[align=center] at (0,0) (table) {
\centering
\setlength{\tabcolsep}{3pt}
\begin{tabular}{ccc}
\toprule
Id & Type & Values \\
\midrule
$q$ & Query & Act. $24$, Time $41$ \\
$c_1$ & \textcolor{olive}{Comp.} & Act. $16$  \\
$c_2$ & \textcolor{olive}{Comp.} & Act. $17$  \\
$c_3$ & \textcolor{blue}{Prec.} & Act. $22$, Predecessor $16$ \\
$c_4$ & \textcolor{blue}{Prec.} & Act. $22$, Predecessor $17$ \\
$c_5$ & \textcolor{blue}{Prec.} & Act. $23$, Predecessor $22$ \\
$c_6$ & \textcolor{blue}{Prec.} & Act. $24$, Predecessor $23$ \\
$c_7$ & \textcolor{red}{Res.} & Resource $4$, Time $23$ \\
\bottomrule
\end{tabular}
};

\node[box,rounded corners,minimum width=80mm,minimum height=65mm,very thick] at ($(table)+(0,0)$) (aroundtable) {};

\node[gnode,minimum size=8mm,very thick] at ($(table)+(6.3,5.0)$) (q) {$q$};

\node[gnode,minimum size=8mm,color=blue,fill=blue!10,very thick] at ($(q)+(0,-3)$) (c6) {$c_6$};

\node[gnode,minimum size=8mm,color=blue,fill=blue!10,very thick] at ($(c6)+(0,-3)$) (c5) {$c_5$};

\node[gnode,minimum size=8mm,color=blue,fill=blue!10,very thick] at ($(c5)+(0.7,-3)$) (c3) {$c_3$};

\node[gnode,minimum size=8mm,color=blue,fill=blue!10,very thick] at ($(c5)+(-0.7,-3)$) (c4) {$c_4$};

\node[gnode,minimum size=8mm,color=red,fill=red!10,very thick] at ($(c5)+(0,-6)$) (c7) {$c_7$};

\node[gnode,minimum size=8mm,color=olive,fill=olive!10,very thick] at ($(c5)+(-1.4,-6)$) (c2) {$c_2$};

\node[gnode,minimum size=8mm,color=olive,fill=olive!10,very thick] at ($(c5)+(1.4,-6)$) (c1) {$c_1$};

\node[box,rounded corners,very thick,align=center,fit=(q)(c1)(c2)(c7),inner sep=1.0mm] (iisgraph) {};


\path[-,draw,black] (q.south) to (c6.north);
\path[-,draw,black] (c6.south) to (c5.north);
\path[-,draw,black] (c5.south) to (c3.north);
\path[-,draw,black] (c5.south) to (c4.north);
\path[-,draw,black] (c4.east) to (c3.west);
\path[-,draw,black] (c3.south) to (c1.north);
\path[-,draw,black] (c3.south) to (c7.north);
\path[-,draw,black] (c4.south) to (c2.north);
\path[-,draw,black] (c4.south) to (c7.north);
\path[-,draw,black] (c1.west) to (c7.east);
\path[-,draw,black] (c7.west) to (c2.east);

\node[box,rounded corners,very thick, minimum width=50mm, minimum height=14mm] at ($(q)+(8.0,0)$) (qr) {$q$: Act. 24 completed\\ before time 41.};

\node[box,rounded corners, color=blue,fill=blue!10,very thick, minimum width=45mm,minimum height=22mm] at ($(qr)+(0,-3)$) (r6) {$r_6$: Act. 23 must be \\ completed before \\ starting Act. 24.};

\node[box,rounded corners,color=blue,fill=blue!10,very thick, minimum width=45mm,minimum height=22mm] at ($(r6)+(0,-3)$) (r5) {$r_5$: Act. 22 must be \\ completed before \\ starting Act. 23.};

\node[box,rounded corners,color=blue,fill=blue!10,very thick, minimum width=45mm,minimum height=22mm] at ($(r5)+(2.7,-3)$) (r3) {$r_3$: Act. 16 must be \\ completed before \\ starting Act. 22.};

\node[box,rounded corners,color=blue,fill=blue!10,very thick, minimum width=45mm,minimum height=22mm] at ($(r5)+(-2.7,-3)$) (r4) {$r_4$: Act. 17 must be \\ completed before \\ starting Act. 22.};

\node[box,rounded corners,color=red,fill=red!10,very thick, minimum width=45mm,minimum height=22mm
] at ($(r5)+(0,-6.0)$) (r7) {$r_7$: Res. 4 is scarce \\ at time 23 due to \\ Acts. 16 and 17.};

\node[box,rounded corners,color=olive,fill=olive!10,very thick, minimum width=26mm,minimum height=22mm] at ($(r5)+(-4.3,-6.0)$) (r2) {$r_2$: Act. 17 \\ must be \\ completed.};

\node[box,rounded corners,color=olive,fill=olive!10,very thick, minimum width=26mm,minimum height=22mm] at ($(r5)+(4.3,-6.0)$) (r1) {$r_1$: Act. 16 \\ must be \\ completed.};

\node[box,rounded corners,very thick, align=center,fit=(qr)(r1)(r2)(r7),inner sep=1.0mm] (reasongraph) {};


\path[-,draw,black] (qr.south) to (r6.north);
\path[-,draw,black] (r6.south) to (r5.north);
\path[-,draw,black] (r5.south) to (r3.north);
\path[-,draw,black] (r5.south) to (r4.north);
\path[-,draw,black] (r4.east) to (r3.west);
\path[-,draw,black] (r3.south) to (r1.north);
\path[-,draw,black] (r3.south) to (r7.north);
\path[-,draw,black] (r4.south) to (r2.north);
\path[-,draw,black] (r4.south) to (r7.north);
\path[-,draw,black] (r1.west) to (r7.east);
\path[-,draw,black] (r7.west) to (r2.east);

\node[box,rounded corners,very thick, minimum width=10mm,minimum height=7mm,text depth=0pt] at ($(table)+(-2.0,7.0)$) (iis) {\gls{iis}};

\node[box,rounded corners,very thick, minimum width=37mm,minimum height=7mm,text depth=0pt] at ($(iis)+(8.3,0)$) (dual) {\gls{iis} Dual Graph};

\node[box,rounded corners,very thick, minimum width=41mm,minimum height=7mm,text depth=0pt] at ($(dual)+(10.0,0)$) (greas) {Graph of Reasons};

\path[-{Stealth},draw,black] (iis.east) -- (dual.west) node[midway, above] {Relating Constraints} node[midway, below] {\textit{wrt} Scopes};
\path[-{Stealth},draw,black] (dual.east) -- (greas.west) node[midway, above] {Labelling Constraints with} node[midway, below] {\acrshort{nl} Templates (Table \ref{tab:templates_rcpsp})};

\path[-{Stealth},draw,black] (iis.south) -- (aroundtable.north);
\path[-{Stealth},draw,black] (dual.south) -- (iisgraph.north);
\path[-{Stealth},draw,black] (greas.south) -- (reasongraph.north);

\end{tikzpicture}
\end{adjustbox}
\caption{Process to build a graph-based explanation applied to our running example: (1) relating constraints within the IIS to yield its dual graph, and (2) labelling constraints in the dual graph to build the graph of reasons.}
\label{fig:megafig}
\end{figure}

\begin{table}[b]
    \caption{Templates for each type of \gls{rcpsp} constraints.}
    \label{tab:templates_rcpsp}
    \setlength{\tabcolsep}{3pt}
    \begin{tabularx}{0.9\columnwidth}{ll}
        \toprule
        (i) \textcolor{olive}{Completion}: & ``\emph{Activity j must be completed}'' \\
        (ii) \textcolor{blue}{Precedence}: & ``\emph{Act. h must be completed before Act. j starts}'' \\
        (iii) \textcolor{red}{Resource}: & ``\emph{Res. r is scarce at time $t$ due to Acts. $i,\dots,j$}'' \\
        \bottomrule
    \end{tabularx}
\end{table}

\subsection{Constructing the \emph{Graph of Reasons}}\label{subsec:ordering}

When considering a complex, large-scale optimisation problem (such as our benchmark problems in Section \ref{subsec:mip_prob}), its \gls{iis} can involve many constraints. 
Henceforth, the challenge lies in organising such constraints to assemble an understandable explanation for a user. 
With that aim, in what follows, we show that, by exploiting the structural relations among constraints in the form of a graph, we can build a ``graph-based'', structured explanation for a user whose structure can be better understood (in contrast with the unstructured list of constraints in the \gls{iis}).

To this end, we rely on well-known concepts from the Constraint Reasoning literature as follows. 
Hypergraphs \cite{dechter2022reasoning} represent how a set of constraints relates to decision variables. 
A \emph{hypergraph} 
$\mathcal{H}=(V,S)$ is a pair where $V$ is a set of vertices, and $S=\{S_1,\dots,S_l\}$, $S_i\subseteq V$, is a set of subsets of $V$ called hyperedges. Notice that the vertices represent the set of decision variables of our model, and a hyperedge $S_i$ is the scope of constraint $c_i$, which is the subset of decision variables involved in the constraint.   
However, hypergraphs do not directly represent the relations among constraints, which is, instead, the purpose of the concept of dual graphs.
Formally, a \emph{dual graph} \cite{dechter2022reasoning} $D=\{S,I\}$ is an undirected graph whose vertices $S$ are the hyperedges of $\mathcal{H}$ and $(S_i,S_j)\in I$ iff $S_i \cap S_j \neq \emptyset$. 
Intuitively, each node of a dual graph corresponds to a constraint, and each edge indicates that two constraints share some variables. 
Therefore, the dual graph of an \gls{iis} captures the relations between its constraints. Hence, we will use it as the basis for building an explanation.

Figure \ref{fig:megafig} showcases our approach for building explanations from an \gls{iis} in the context of our running example. First, we relate the constraints in an \gls{iis} to yield its dual graph. 
Despite capturing the structural relations among constraints, the dual graph of an \gls{iis} might still be challenging to understand for a user since it does not contain any \gls{nl} interpretation of the mathematical constraints.
To overcome this limitation, following the approach by \cite{pozanco2022explaining}, we label each constraint in the dual graph with an \gls{nl}
template.\footnote{Notice that translating queries and labelling constraints using \gls{nl} templates are the only problem-dependent steps that must be carried out by an expert, in contrast with the rest of our approach, which is domain-independent.
As an example, Table \ref{tab:templates_rcpsp} shows the \gls{nl} templates involved in our running example for each type of \gls{rcpsp} constraints in Section \ref{subsec:mip_prob}. } 
Figure \ref{fig:megafig} shows the constraint labelling for our running example. 
Therefore, we can finally see that a ``\emph{reason}'' takes the form of the \gls{nl} interpretation of a constraint. 
Therefore, we refer to our graph-based explanation as a \emph{graph of reasons}. 
A graph of reasons captures the structural relations among constraints inherent in the dual graph while providing human-readable semantics for each constraint.

A desirable property that characterises our explanations is that they are not ``disconnected'', i.e., they involve reasons that are always related to each other. This property can be directly expressed in terms of the graph-theory concept of \emph{connectedness}, as formally discussed in Theorem \ref{theo:con}.

\begin{theorem}\label{theo:con} 
    Given an \gls{iis}, the \gls{iis} dual graph $D$ is connected.
\end{theorem}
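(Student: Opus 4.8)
The plan is to argue by contradiction, using the \emph{irreducibility} half of Definition~\ref{def:iis} as the engine. Suppose the dual graph $D=\{S,I\}$ of the \gls{iis} were disconnected. A disconnected graph has at least two connected components, so I can group them into two non-empty, disjoint parts $S^A$ and $S^B$ with $S^A \cup S^B = S$ and no edge of $I$ crossing between them (take one component as $S^A$ and the union of the remaining components as $S^B$). By the defining rule of the dual graph, $(S_i,S_j)\in I \iff S_i \cap S_j \neq \emptyset$; hence the absence of any crossing edge means that every scope in $S^A$ is disjoint from every scope in $S^B$. Consequently the variable sets $V_A = \bigcup_{S_i \in S^A} S_i$ and $V_B = \bigcup_{S_j \in S^B} S_j$ are themselves disjoint.

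First I would exploit irreducibility to obtain a satisfying assignment for each side independently. Each of $S^A$ and $S^B$ indexes a \emph{proper} subset of the \gls{iis} (both are non-empty, so neither coincides with the whole \gls{iis}), so by Definition~\ref{def:iis} each of these subsets is feasible. Therefore there exists an assignment $\alpha_A$ of the variables in $V_A$ satisfying every constraint whose scope lies in $S^A$, and likewise an assignment $\alpha_B$ on $V_B$ satisfying every constraint indexed by $S^B$.

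Then I would stitch the two partial assignments together. Since $V_A$ and $V_B$ are disjoint, $\alpha_A$ and $\alpha_B$ cannot conflict, so their union—extended arbitrarily on any variable appearing in no \gls{iis} constraint—is a well-defined total assignment of $V$. Because each constraint of the \gls{iis} has its scope entirely contained in either $V_A$ or $V_B$, this combined assignment satisfies \emph{all} constraints of the \gls{iis} simultaneously. This contradicts the fact that the \gls{iis} is an infeasible system (Definition~\ref{def:iis}), and the contradiction forces $D$ to be connected.

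The only delicate bookkeeping is the equivalence between ``no crossing edge'' and ``disjoint variable sets across the two sides,'' which follows directly from the edge rule applied to every pair with one scope on each side. The step that carries the real weight is the appeal to irreducibility: it is precisely the feasibility of every proper subset that licenses the two independent satisfying assignments, and without it the disjoint-scopes observation alone would not suffice, since an arbitrary infeasible system need not have feasible components.
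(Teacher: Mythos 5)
Your proof is correct and follows essentially the same argument as the paper: assume the dual graph is disconnected, split the \gls{iis} into two parts with disjoint scopes, invoke irreducibility to get feasibility of each proper subset, and combine the independent satisfying assignments to contradict infeasibility. Your version is in fact slightly more careful than the paper's, since you explicitly construct the combined assignment and handle the case of more than two components by grouping them.
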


\begin{proof}
    We prove by contradiction that a disconnected \gls{iis} dual graph, i.e., a graph with multiple disconnected components, is not possible to obtain due to the infeasible and irreducible properties of an \gls{iis} (Definition \ref{def:iis}). 
    
    Consider an \gls{iis} graph that is disconnected in two subgraphs, $A = (C_A,E_A)$ and $B = (C_B,E_B)$ such that
    \begin{align*}
        C_A \cup C_B &= IIS, \\
        C_A \cap C_B &= \emptyset, \\
        S_A \cap S_B &= \emptyset,
    \end{align*}
    where $S_A,S_B$ are the scopes of the set of constraints $C_A$ and $C_B$ respectively. Notice that, since the graph is disconnected, $A$ and $B$ subgraphs do not share variables, i.e., $S_A \cap S_B = \emptyset$. 
    Then, since the subset of constraints $C_A$ and $C_B$ are (disjoint) subsets of the \gls{iis}, they are feasible (see Definition \ref{def:iis}). 
    However, since the both subsets of constraints have different variables (different domains), the union of both constraint sets would not affect the feasibility status of the resulting system of constraints. 
    As a result, we would have a feasible \gls{iis}, 
    which is a contradiction in itself.
    Therefore, it is not possible to have a disconnected \gls{iis} graph. 
\end{proof}

\pgfplotstableread[header=false]{
    0 7266 68.12 10.28928
    1 1597 14.97 33.43965
    2 852 7.99 43.89732
    3 421 3.95 40.25644
    4 221 2.07 33.76003
    5 111 1.04 35.94548
    6 75 0.70 28.62167
    7 48 0.45 26.12029
    8 34 0.32 19.97056
    9 18 0.17 34.04469
}\total
\pgfplotstableread[header=false]{
    0 7266 68.12 10289.28
    1 1597 14.97 33439.65
    2 852 7.99 43897.32
    3 421 3.95 40256.44
    4 221 2.07 33760.03
    5 111 1.04 35945.48
    6 75 0.70 28621.67
    7 48 0.45 26120.29
    8 34 0.32 19970.56
    9 18 0.17 34044.69
}\totalnodiv

\pgfplotsset{
    select row/.style={
        x filter/.code={\ifnum\coordindex=#1\else\def\pgfmathresult{}\fi}
    }
}

\definecolor{myred}{HTML}{af0926}
\definecolor{myblue}{HTML}{80e2ff}
\definecolor{mygreen}{HTML}{1e9a51}

\def\figh{45mm}

\begin{figure}[b]
    \centering
    \begin{tikzpicture}
        \begin{axis} [
            width=0.9\columnwidth,
            height=\figh,
            ybar,
            bar width = 9pt,
            ymajorgrids = false,
            ylabel = {
            (\%)
            },
            ymin = 0, 
            ymax = 70,
            ytick distance = 20,
            y label style={at={(-0.07,0.5)},},
            enlarge x limits=0.07,
            xlabel = {\small IIS size reduction},
            xtick={0,...,9},
            xticklabels from table={\total}{0},
            axis y line*=left,
            legend style={
                at={(0.01,0.98)},
                anchor=north west,
            },
            legend cell align={left},
            y tick label style={
                /pgf/number format/fixed,
                /pgf/number format/fixed zerofill,
                /pgf/number format/precision=0
            },
        ]
        
        \pgfplotsinvokeforeach{0,...,9}{
            \addplot[fill=mygreen!50,postaction={pattern=north east lines},bar shift=-4.5pt] table [select row=#1, x expr=#1, y=2] {\total};
        }
        \end{axis}
        \begin{axis} [
            width=0.9\columnwidth,
            height=\figh,
            ybar,
            bar width = 9pt,
            ymajorgrids = true,
            grid style=dashed,
            ylabel = {
            ($\times 10^{3}$)
            },
            ymin = 0,
            ymax = 70,
            ytick={0,20,...,60},
            y label style={at={(1.07,0.5)},rotate=180,},
            enlarge x limits=0.07,
            xtick={11,12,...,20},
            xticklabels from table={\total}{0},
            axis y line*=right,
            legend style={
                at={(1.0,1.0)},
                anchor=north east,
            },
            legend cell align={left},
        ]
        \pgfplotsinvokeforeach{0,...,9}{
            \addplot[fill=myred!50,postaction={pattern=crosshatch dots},bar shift=4.5pt] table [select row=#1, x expr=#1, y=3] {\total};
        }
        \end{axis}
    \end{tikzpicture}
    \caption{Distribution of \gls{iis} size reduction between those computed by FORQES and those computed by CPLEX (\textcolor{mygreen}{lines}, left y-axis) and runtime increase to compute the smallest IIS (\textcolor{myred}{dots}, right y-axis).}
    \label{fig:smallest_iis_comb}
\end{figure}

\section{Experimental Evaluation}
\label{sec:exp}

The main goal of this section is to evaluate the empirical hardness \cite{leyton2002learning} of computing explanations. 
As a benefit, this allows us to characterise problem instances for which we can compute explanations in real time for users. 
Our study will measure (i) the times needed to compute \gls{iis}s to build explanations, and (ii) the \emph{time overheads} of building explanations with respect to solving optimisation problems. Thus, given a problem and a query, we define the \emph{time overhead} to build an explanation for the query as the ratio between the time to compute the \gls{iis} and the time to compute the optimal solution for the problem. Therefore, our study compares the hardness of solving optimisation problems with respect to the hardness of computing their explanations.

An additional goal is to empirically learn whether the computational effort required to compute the smallest \gls{iis} with state-of-the-art techniques pays off. This would be the case if we do obtain \glspl{iis} that are significantly smaller than those computed by CPLEX, our off-the-shelf solver of choice. With this aim, we compare the \gls{iis} sizes (number of constraints) obtained and the runtimes required by both methods.

\subsection{Experimental Setting}\label{subsec:data}

We solve all problem instances and compute \glspl{iis} with CPLEX v22.1.0.\footnote{Code for the experimental evaluation at: \url{https://github.com/RogerXLera/ExMIPExperimentalEvaluation}.} 
We employ FORQES \cite{ignatiev2015smallest} to compute the smallest \gls{iis} because, to the best of our knowledge, is the most recent algorithm for computing the smallest \gls{iis}.\footnote{FORQES was originally designed to compute the smallest \gls{mus}, a concept closely related to \glspl{iis} (see Section \ref{sec:related}). To compute the smallest \gls{mus}, FORQES iteratively computes the \gls{mcs} of different sets of constraints. We adapt FORQES by iteratively computing the \gls{mfs} instead, an analogous problem to \gls{mcs} in the constraint programming literature \cite{chinneck2007feasibility}. Our adaptation does not affect the complexity of the algorithm.} 
We run our tests on a machine with a 10-core 2.2GHz CPU and 16GB of RAM. We limit solving time to 10 hours per problem instance and per explanation.

\subsubsection{Problem Instances}\label{subsec:prob_instances}
\paragraph{\gls{rcpsp}}
Among the many variants of the \gls{rcpsp}, 
here we consider the single-mode \gls{rcpsp}, where each activity can only be completed in one way. 
We use 1440 problem instances with different number of activities $|J|\in \{30,60,90\}$ from the \gls{psplib} \cite{kolisch1997psplib}.\footnote{\url{https://www.om-db.wi.tum.de/psplib/} (April 2024).}

\paragraph{\gls{wdp}}
We use the \gls{cats} \cite{leyton2000towards} to generate realistic instances of \gls{ca}.\footnote{\url{https://github.com/kevinlb1/CATS}.}
We consider 4 realistic distributions (\emph{paths, regions, matching,} and \emph{scheduling}) to generate bids and we vary the number of bids $|B| \in \{20,50,100,200,500,1000\}$ for 10 different seeds, i.e., we solve a total of 240 instances.\footnote{We vary the number of goods according to the number of bids such that $|G| \in \{12,20,50,100,200,500\}$.}

Once all problem instances are solved, we generate queries as described in Section \ref{sec:query} (8 query types both for the \gls{rcpsp} and the \gls{wdp}). Therefore, we compute an \gls{iis} for each combination of query type and problem instance, both for the \gls{rcpsp} and the \gls{wdp} (11,520 and 1,920 \glspl{iis} respectively).

\definecolor{myred}{HTML}{af0926}
\definecolor{myblue}{HTML}{80e2ff}
\definecolor{mygreen}{HTML}{1e9a51}

\def\figh{52mm}
\def\figw{0.7\columnwidth}

\begin{figure}[t]
    \centering
    \begin{tikzpicture}
        \begin{semilogyaxis} [
            boxplot/draw direction=y,
            width=\figw,
            height=\figh,
            xmin = 0.5,
            xmax = 3.5,
            ymajorgrids = false,
            ylabel = Time (s),
            ymin = 0.005, 
            ymax = 40000,
            ytick = {0.01,0.1,1,10,100,1000,10000},
            yticklabels = {$10^{-2}$,$10^{-1}$,$10^{0}$,$10^{1}$,$10^{2}$,$10^{3}$,$10^{4}$},
            y label style={at={(-0.12,0.5)},},
            enlarge x limits=0.07,
            xtick={1,2,3},
            xticklabels = {, , },
            axis y line*=left,
            legend style={
                at={(0.01,0.98)},
                anchor=north west,
            },
            legend cell align={left},
            y tick label style={
                /pgf/number format/fixed,
                /pgf/number format/fixed zerofill,
                /pgf/number format/precision=0
            },
        ]
        \addplot[mark=*,fill=myblue,boxplot prepared={
                lower whisker=0.380,
                lower quartile=12.120,
                median=31.170,
                upper quartile=918.060,
                upper whisker=36000.0,
            },] coordinates {};
        \addplot[mark=*,fill=mygreen,boxplot prepared={
                lower whisker=0.019,
                lower quartile=0.745,
                median=3.252,
                upper quartile=53.398,
                upper whisker=19000.0,
            },] coordinates {};
        \addplot[mark=*,fill=myred,boxplot prepared={
                lower whisker=0.0019,
                lower quartile=0.0019,
                median=0.0019,
                upper quartile=0.0019,
                upper whisker=0.0019,
            },] coordinates {};
        
        \end{semilogyaxis}
        \begin{axis} [
            boxplot/draw direction=y,
            width=\figw,
            height=\figh,
            xmin = 0.5,
            xmax = 3.5,
            ymajorgrids = false,
            grid style=dashed,
            ylabel = {Overhead (\%)},
            ymin = -5, 
            ymax = 45,
            ytick distance = 10,
            y label style={at={(1.10,0.5)},rotate=180,},
            enlarge x limits=0.07,
            xtick={1,2,3},
            xticklabels = {\gls{milp}, \gls{iis},Overhead},
            axis y line*=right,
            legend style={
                at={(0.02,0.98)},
                anchor=north west,
            },
            legend cell align={left},
            y tick label style={
                /pgf/number format/fixed,
                /pgf/number format/fixed zerofill,
                /pgf/number format/precision=0
            },
        ]
        \addplot[mark=*,fill=mygreen,boxplot prepared={
                lower whisker=-5.019,
                lower quartile=-5.745,
                median=-5.252,
                upper quartile=-53.398,
                upper whisker=-132.376,
            },] coordinates {};
        \addplot[mark=*,fill=myblue,boxplot prepared={
                lower whisker=-5.019,
                lower quartile=-5.745,
                median=-5.252,
                upper quartile=-53.398,
                upper whisker=-132.376,
            },] coordinates {};
        \addplot[mark=*,fill=myred,boxplot prepared={
                average=327.223,
                lower whisker=0.000,
                lower quartile=1.271,
                median=6.732,
                upper quartile=17.064,
                upper whisker=40.754,
            },] coordinates {};
        \end{axis}
    \end{tikzpicture}
    \caption{Runtimes to solve \gls{milp} problems and compute \glspl{iis} (left y-axis), and time overhead (right y-axis), for \gls{rcpsp}.}
    \label{fig:boxplot_rcpsp}
\end{figure}
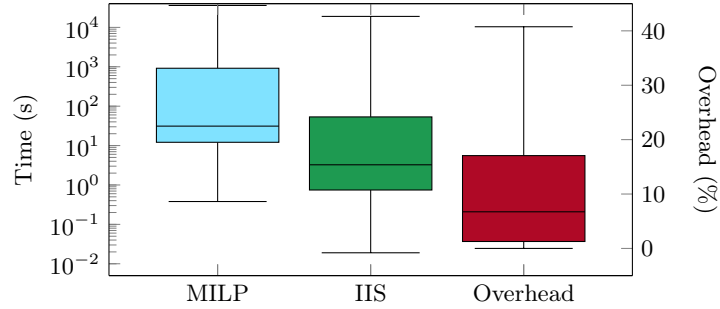

\pgfplotstableread[header=false]{
    Q2 9.384 0.328
    Q6 10.321 0.732
    Q4 20.075 1.263
    Q5 34.579 1.985
    Q7 36.010 2.243
    Q1 36.827 2.136
    Q8 39.204 2.617
    Q3 68.036 4.284
}\iistime
\pgfplotstableread[header=false]{
    Q2 5.489 0.758
    Q6 5.219 0.171
    Q4 7.607 0.234
    Q5 8.353 0.211
    Q7 8.604 0.260
    Q1 8.492 0.233
    Q8 10.930 0.499
    Q3 15.792 0.375
}\overhead

\pgfplotsset{
    select row/.style={
        x filter/.code={\ifnum\coordindex=#1\else\def\pgfmathresult{}\fi}
    }
}

\definecolor{myred}{HTML}{af0926}
\definecolor{myblue}{HTML}{80e2ff}
\definecolor{mygreen}{HTML}{1e9a51}

\def\figh{52mm}
\def\figw{0.9\columnwidth}

\begin{figure}[t]
    \centering
    \begin{tikzpicture}
        \begin{axis} [
            width=\figw,
            height=\figh,
            ybar,
            bar width = 9pt,
            ymajorgrids = false,
            ylabel = {$IIS$ runtime (s)},
            ymin = 0, 
            ymax = 80,
            ytick distance = 20,
            y label style={at={(-0.1,0.5)},},
            enlarge x limits=0.07,
            xtick={0,...,7},
            xticklabels from table={\iistime}{0},
            axis y line*=left,
            legend style={
                at={(0.01,0.98)},
                anchor=north west,
            },
            legend cell align={left},
            y tick label style={
                /pgf/number format/fixed,
                /pgf/number format/fixed zerofill,
                /pgf/number format/precision=0
            },
        ]
        
        \pgfplotsinvokeforeach{0,...,7}{
            \addplot[fill=mygreen!50,postaction={pattern=north east lines},bar shift=-4.5pt,error bars/.cd, y dir=both, y explicit] table [select row=#1, x expr=#1, y=1,y error=2] {\iistime};
        }
        
        \end{axis}
        \begin{axis} [
            width=\figw,
            height=\figh,
            ybar,
            bar width = 9pt,
            ymajorgrids = true,
            grid style=dashed,
            ylabel = {Overhead (\%)},
            ymin = 0, 
            ymax = 20,
            ytick distance = 5,
            y label style={at={(1.1,0.5)},rotate=180,},
            enlarge x limits=0.07,
            xtick={20,...,28},
            xticklabels from table={\overhead}{0},
            axis y line*=right,
            legend style={
                at={(0.02,0.98)},
                anchor=north west,
            },
            legend cell align={left},
        ]
        \pgfplotsinvokeforeach{0,...,7}{
            \addplot[fill=myred!50,postaction={pattern=crosshatch dots},bar shift=4.5pt,error bars/.cd, y dir=both, y explicit] table [select row=#1, x expr=#1, y=1,y error=2] {\overhead};
        }
        \end{axis}
    \end{tikzpicture}
    \caption{Average \gls{iis} computation time (\textcolor{mygreen}{lines}, left y-axis) and overhead (\textcolor{myred}{dots}, right y-axis) for different query types for the \gls{rcpsp}.}
    \label{fig:barplot_rcpsp}
\end{figure}
\normalsize

\subsection{Results}\label{subsec:results}

\subsubsection{Smallest \gls{iis}}\label{subsec:results:smalliis}

We compute the \glspl{iis} resulting from all generated queries with both algorithms (i.e., CPLEX and FORQES), and we compare both algorithms' runtimes as well as the difference in the sizes of the computed \glspl{iis}. 
Figure \ref{fig:smallest_iis_comb} shows such a difference in \glspl{iis}' sizes and the mean runtime increase of computing the smallest \gls{iis} with FORQES. We report the runtime increase as the ratio between FORQES's runtime and CPLEX's runtime. 
Results show that for the majority of instances ($\sim$$70\%$), both algorithms obtain \glspl{iis} of the same size. FORQES computes \gls{iis}s with a size reduction greater than 2 constraints in only $7\%$ of the cases. 
In contrast, FORQES requires $4\cdot 10^{4}$ times more runtime than CPLEX on average. 
Overall, our results show that computing the smallest \gls{iis} via FORQES does \emph{not} result in \glspl{iis} that are significantly smaller than those computed by CPLEX, which, in contrast, terminates in a significantly shorter runtime.
Henceforth, in the rest of our experimental evaluation, we only employ CPLEX to compute \glspl{iis}.

\subsubsection{Time overhead when computing explanations}

\paragraph{\gls{rcpsp}} Figure \ref{fig:boxplot_rcpsp} shows the \gls{milp} solving times, the \gls{iis} computation times, and the time overhead to compute explanations for \gls{rcpsp} problem instances. Notice that computing \glspl{iis} is significantly faster than solving \gls{milp} problems. We observe that the overhead is, for most cases, lower than $40\%$, indicating that most explanations can be computed faster than half the time required to solve a problem instance. 
On the one hand, the median overhead to compute an explanation is $6.7\%$, and the median time is $3.3$ seconds. On the other hand, the median \gls{milp} solving time is $31.2$ seconds. Finally, we compute more than $75\%$ of the explanations in less than 1 minute. Notice that generating fast explanations is a key requirement for the users of explainable systems to allow them to interact according to the study conducted by \cite{lakkaraju2022rethinking}. 
Figure \ref{fig:barplot_rcpsp} shows the impact of the query type (see Table \ref{tab:queries_rcpsp}) on the average runtime to compute an \gls{iis}, and consequently, the required overhead to compute an explanation. On the one hand, most query types are solved on average in less than 60 seconds and have less than $10\%$ of time overhead. 
On the other hand, some query types are significantly harder than others, e.g., Q3. It remains future work to establish why such a query is harder to explain than others.

\pgfplotstableread[header=false]{
sch.\vphantom{pgth} 0.7 0.1
reg.\vphantom{pgth} 0.8 0.1
mat.\vphantom{pgth} 0.9 0.1
pat.\vphantom{pgth} 1.4 0.2
}\iistimesmall
\pgfplotstableread[header=false]{
scheduling 0.554 0.073
regions 1.193 0.146
matching 1.341 0.097
paths 1.661 0.208
}\overheadsmall
\pgfplotstableread[header=false]{
sch.\vphantom{pgth} 0.007 0.000
mat.\vphantom{pgth} 0.621 0.300
pat.\vphantom{pgth} 9.976 15.125
reg.\vphantom{pgth} 553.564 897.712
}\iistimemedium
\pgfplotstableread[header=false]{
scheduling 0.00806 0.00232
matching 0.55741 0.22286
paths 9.46089 4.65204
regions 19.19069 5.13679
}\overheadmedium
\pgfplotstableread[header=false]{
sch.\vphantom{pgth} 0.022 0.004
mat.\vphantom{pgth} 298.535 45.072
pat.\vphantom{pgth} 5944.190 1480.373
reg.\vphantom{pgth} 36000.181 1656.709
}\iistimelarge
\pgfplotstableread[header=false]{
scheduling 0.04304 0.01160
matching 20.010669 3.353928
paths 39.40299 6.11688
regions 39.75008 2.64844
}\overheadlarge

\pgfplotsset{
    select row/.style={
        x filter/.code={\ifnum\coordindex=#1\else\def\pgfmathresult{}\fi}
    }
}

\definecolor{myred}{HTML}{af0926}
\definecolor{myblue}{HTML}{80e2ff}
\definecolor{mygreen}{HTML}{1e9a51}

\def\figh{45mm}
\def\figw{0.49\textwidth}
\small
\begin{figure*}[t]
    \captionsetup[subfigure]{justification=centering}
    \centering
    \begin{subfigure}{\figw}
        \centering
    \begin{tikzpicture}
        \begin{axis} [
            width=0.90\textwidth,
            height=\figh,
            ybar,
            bar width = 7pt,
            ymajorgrids = true,
            grid style=dashed,
            ylabel = {\footnotesize IIS runtime ($10^{-2}$ s)},
            ymin = 0, 
            ymax = 2,
            ytick distance = 0.5,
            y label style={at={(-0.15,0.5)},},
            enlarge x limits=0.2,
            xtick={0,...,3},
            xticklabels from table={\iistimesmall}{0},
            axis y line*=left,
            legend style={
                at={(0.01,0.98)},
                anchor=north west,
            },
            legend cell align={left},
            y tick label style={
                /pgf/number format/fixed,
                /pgf/number format/fixed zerofill,
                /pgf/number format/precision=1
            },
        ]
        \pgfplotsinvokeforeach{0,...,3}{
            \addplot[fill=mygreen!50, postaction={pattern=north east lines},bar shift=-3.5pt,error bars/.cd, y dir=both, y explicit] table [select row=#1, x expr=#1, y=1,y error=2] {\iistimesmall};
        }
        \end{axis}
        \begin{axis} [
            width=0.90\textwidth,
            height=\figh,
            ybar,
            bar width = 7pt,
            ylabel = {\footnotesize Overhead ($10^{-2}\times$)},
            ymin = 0, 
            ymax = 2.0,
            ytick distance = 0.5,
            y label style={at={(1.15,0.5)},rotate=180,},
            enlarge x limits=0.2,
            xtick={8,...,11},
            xticklabels from table={\overheadsmall}{0},
            axis y line*=right,
            legend style={
                at={(0.0,1.0)},
                anchor=north west,
            },
            legend cell align={left},
            y tick label style={
                /pgf/number format/fixed,
                /pgf/number format/fixed zerofill,
                /pgf/number format/precision=1
            },
        ]
        \pgfplotsinvokeforeach{0,...,3}{
            \addplot[fill=myred!50,postaction={pattern=crosshatch dots},bar shift=3.5pt,error bars/.cd, y dir=both, y explicit] table [select row=#1, x expr=#1, y=1,y error=2] {\overheadsmall};
        }
        \end{axis}
    \end{tikzpicture}
    \subcaption{Small}
    \label{fig:dist_small}
    \end{subfigure}
    \hfill
    \begin{subfigure}{\figw}
    \centering
    \begin{tikzpicture}
        \begin{semilogyaxis} [
            width=0.90\textwidth,
            height=\figh,
            ybar,
            log origin=infty,
            bar width = 7pt,
            ymajorgrids = true,
            grid style=dashed,
            ylabel = {\footnotesize IIS runtime (s)},
            ymin = 0.001, 
            ymax = 1000,
            ytick = {0.001,0.01,0.1,1,10,100,1000},
            log plot exponent style/.style={
                /pgf/number format/precision=0,
            }, 
            y label style={at={(-0.20,0.5)},},
            enlarge x limits=0.2,
            xtick={0,...,3},
            xticklabels from table={\iistimemedium}{0},
            axis y line*=left,
            legend style={
                at={(0.01,0.98)},
                anchor=north west,
            },
            legend cell align={left},
            y tick label style={
                /pgf/number format/fixed,
                /pgf/number format/fixed zerofill,
                /pgf/number format/precision=0
            },
        ]
        \pgfplotsinvokeforeach{0,...,3}{
            \addplot[fill=mygreen!50,postaction={pattern=north east lines},bar shift=-3.5pt,error bars/.cd,y dir=both, y explicit] table [select row=#1, x expr=#1, y=1,y error=2] {\iistimemedium};
        }
        \end{semilogyaxis}
        \begin{semilogyaxis} [
            width=0.90\textwidth,
            height=\figh,
            ybar,
            log origin=infty,
            bar width = 7pt,
            ylabel = {\footnotesize Overhead ($\times$)},
            ymin = 0.001, 
            ymax = 1000,
            ytick = {0.001,0.01,0.1,1,10,100,1000},
            log plot exponent style/.style={
                /pgf/number format/precision=0,
            }, 
            y label style={at={(1.2,0.5)},rotate=180,},
            enlarge x limits=0.2,
            xtick={8,...,11},
            xticklabels from table={\overheadmedium}{0},
            axis y line*=right,
            legend style={
                at={(0.0,1.0)},
                anchor=north west,
            },
            legend cell align={left},
            y tick label style={
                /pgf/number format/fixed,
                /pgf/number format/fixed zerofill,
                /pgf/number format/precision=1
            },
        ]
        \pgfplotsinvokeforeach{0,...,3}{
            \addplot[fill=myred!50,postaction={pattern=crosshatch dots},bar shift=3.5pt,error bars/.cd, y dir=both, y explicit] table [select row=#1, x expr=#1, y=1,y error=2] {\overheadmedium};
        }
        \end{semilogyaxis}
    \end{tikzpicture}
    \subcaption{Medium}
    \label{fig:dist_medium}
    \end{subfigure}
    \hfill
    \begin{subfigure}{\figw}
    \centering
    \begin{tikzpicture}
        \begin{semilogyaxis} [
            width=0.90\textwidth,
            height=\figh,
            ybar,
            log origin=infty,
            bar width = 7pt,
            ymajorgrids = true,
            grid style=dashed,
            ylabel = {\footnotesize IIS runtime (s)},
            ymin = 0.01, 
            ymax = 100000,
            ytick = {0.01,0.1,1,10,100,1000,10000,100000},
            log plot exponent style/.style={
                /pgf/number format/precision=0,
            }, 
            y label style={at={(-0.20,0.50)},},
            enlarge x limits=0.2,
            xtick={0,...,3},
            xticklabels from table={\iistimelarge}{0},
            axis y line*=left,
            legend style={
                at={(0.01,0.98)},
                anchor=north west,
            },
            legend cell align={left},
            y tick label style={
                /pgf/number format/fixed,
                /pgf/number format/fixed zerofill,
                /pgf/number format/precision=0
            },
        ]
        \pgfplotsinvokeforeach{0,...,3}{
            \addplot[fill=mygreen!50,postaction={pattern=north east lines},bar shift=-3.5pt,error bars/.cd, y dir=both, y explicit] table [select row=#1, x expr=#1, y=1,y error=2] {\iistimelarge};
        }
        \end{semilogyaxis}
        \begin{semilogyaxis} [
            width=0.90\textwidth,
            height=\figh,
            ybar,
            log origin=infty,
            bar width = 7pt,
            ylabel = {\footnotesize Overhead ($\times$)},
            ymin = 0.01, 
            ymax = 100000,
            ytick = {0.01,0.1,1,10,100,1000,10000,100000},
            log plot exponent style/.style={
                /pgf/number format/precision=0,
            }, 
            y label style={at={(1.20,0.50)},rotate=180,},
            enlarge x limits=0.2,
            xtick={8,...,11},
            xticklabels from table={\overheadlarge}{0},
            axis y line*=right,
            legend style={
                at={(0.0,1.07)},
                anchor=north west,
            },
            legend cell align={left},
            y tick label style={
                /pgf/number format/fixed,
                /pgf/number format/fixed zerofill,
                /pgf/number format/precision=0
            },
        ]
        \pgfplotsinvokeforeach{0,...,3}{
            \addplot[fill=myred!50,postaction={pattern=crosshatch dots},bar shift=3.5pt,error bars/.cd, y dir=both, y explicit] table [select row=#1, x expr=#1, y=1,y error=2] {\overheadlarge};
        }
        \end{semilogyaxis}
    \end{tikzpicture}
    \subcaption{Large}
    \label{fig:dist_large}
    \end{subfigure}
    \caption{Average \gls{iis} runtimes (\textcolor{mygreen}{lines}) and overhead (\textcolor{myred}{dots}) for different \gls{wdp} instance distributions: \emph{scheduling}, \emph{matching}, \emph{paths}, and \emph{regions}.}
    \label{fig:distributions}
\end{figure*}

\normalsize

\paragraph{\gls{wdp}} Figure \ref{fig:distributions} shows the average \gls{iis} computation time and overhead for each distribution. We plot the overhead with ``$\times$'' units, indicating the cost of computing \glspl{iis} compared to the time to solve problem instances. 
We organise the results in three plots of different instance sizes based on the number of bids $|B|$: small ($|B| \in \{20,50\}$), medium ($|B| \in \{100,200\}$), and large ($|B| \in \{500,1000\}$). We generally observe that as instance sizes increase, average \gls{iis} computation times and overheads increase too. 
In detail, for small-size instances (Fig. \ref{fig:dist_small}), computing explanations is much cheaper than solving problems (i.e., the overhead is $\ll 1 \times$), and fast (no more than $2\cdot 10^{-2}$ seconds on average).
Moreover, there are significant differences in the average \gls{iis} computation time for the different distribution types for medium-size instances (Fig. \ref{fig:dist_medium}). While computing an \gls{iis} is fast for the scheduling and matching distributions (less than one second), it is much harder for the region's distribution (550 seconds and an overhead of $20\times$ on average).
Finally, computing an \gls{iis} is hard for large instances, and hence the overhead increases for almost all distributions except the scheduling distribution. 
Notice that we observe differences between distributions in \gls{iis} computation time. Such differences have also been reported in the empirical hardness of solving instances \cite{leyton2002learning}. 

\section{Conclusions and future work}\label{sec:conclusion}

In this paper, we aim to build fundamental algorithmic tools for generating explanations for optimisation. We introduced \acrlong{exmip}, an approach to building contrastive explanations to answer queries posed by a user. Our goal is to provide them explanations presenting the reasons for which their query leads to a worse solution. 
Such reasons are then presented in natural language as a \emph{graph of reasons}, whose structure helps a user to understand the relations between reasons. 
We evaluate our approach with two well-known optimisation problems: the \gls{rcpsp} and the \gls{wdp}. Our results indicate that for the \gls{rcpsp}, our method computes explanations much faster than solving an \gls{milp} problem. This is a key aspect of \gls{exmip} since state-of-the-art approaches require solving additional \gls{milp} problems to generate explanations \cite{georgara2022building,zehtabi2024contrastive}. For the \gls{wdp}, we can compute explanations in less than one minute for small and medium instances, though computing explanations for large instances can be more challenging.

As future work, we plan to predict the expected runtime required to compute an explanation for a given problem instance. 
In addition, we plan to apply \gls{exmip} to other optimisation-based application domains such as \textit{Bachelor's Degree Planning} \cite{lera2025computing}, \acrlong{rpp} \cite{lera2024robust} or \textit{Value Systems Aggregation} \cite{lera2024aggregating}.

\begin{credits}
\subsubsection{\ackname} 
The authors were supported by the research project ACISUD (PID2022-136787NB-I00).

\subsubsection{\discintname}
The authors have no competing interests to declare that are
relevant to the content of this article.
\end{credits}
%
%
%
\bibliographystyle{splncs04}
\bibliography{sample}

\end{document}